\newtheorem{theorem}{Theorem}
\newtheorem{lemma}{Lemma}
\newtheorem{proposition}{Proposition}
\title{Asymptotic Bayes risk for Gaussian mixture in a semi-supervised setting}
\author{%
  Marc Lelarge \\
  INRIA-ENS\\
  Paris, France \\
  \texttt{marc.lelarge@ens.fr} \\
  \And
  L\'eo Miolane \\
  INRIA-ENS \\
  Paris, France \\
  \texttt{leo.miolane@inria.fr} \\
}
\begin{document}

\maketitle

\begin{abstract}
  Semi-supervised learning (SSL) uses unlabeled data for training and has been shown to greatly improve performance when compared to a supervised approach on the labeled data available. This claim depends both on the amount of labeled data available and on the algorithm used. 
  In this paper, we compute analytically the gap between the best fully-supervised approach using only labeled data and the best semi-supervised approach using both labeled and unlabeled data. We quantify the best possible increase in performance obtained thanks to the unlabeled data, i.e. we compute the accuracy increase due to the information contained in the unlabeled data. Our work deals with a simple high-dimensional Gaussian mixture model for the data in a Bayesian setting. Our rigorous analysis builds on recent theoretical breakthroughs in high-dimensional inference and a large body of mathematical tools from statistical physics initially developed for spin glasses.
\end{abstract}

\section{Introduction}

Semi-supervised learning (SSL) has proven to be a powerful paradigm for leveraging unlabeled data to mitigate the reliance on large labeled datasets. The goal of SSL is to leverage large amounts of unlabeled data to improve the performance of supervised learning over small datasets. For unlabeled examples to be informative, assumptions have to be made. The cluster assumption states that, if two samples belong to the same cluster in the input distribution, then they are likely to belong to the same class. The cluster assumption is the same as the low-density separation assumption: the decision boundary should lie in the low-density region.

In this paper, we explore analytically the simplest possible parametric model for the cluster assumption: the two clusters are modeled by a mixture of two high-dimensional Gaussians with diagonal covariance so that the optimal decision boundary is a hyperplane. Our model can be seen as a classification problem in a semi-supervised setting. Our aim here is to define a model simple enough to be mathematically tractable while being practically relevant and capturing the main properties of a high-dimensional statistical inference problem.

Our model has three parameters: the high-dimensionality of the data is captured by $\alpha$ the ratio of the number of samples divided by the ambient dimension; the fraction of labeled data point $\eta$ and the amount of overlap between the clusters $\sigma^2$.
As a function of these three parameters, we compute the best possible accuracy (the Bayes risk) when only labeled data are used or when unlabeled data are also used. As a result, we obtain the added value due to the unlabeled data for the best possible algorithm. In particular, we observe a very clear diminishing return of the labeled data, i.e. the first labeled data points bring much more information than the last ones. Hence the regime with very few labeled data points is a priori a regime favorable to SSL. But in this case, we face in practice the problem of small validation sets \cite{oliver2018realistic} which makes hyperparameter tuning impossible.

We find that the range of parameters for which SSL clearly outperforms either unsupervised learning or supervised learning on the labeled data is rather narrow.
In a case with large overlap between the clusters ($\sigma^2\to \infty$), unsupervised learning fails and supervised learning on the labeled data is almost optimal. In a case with small overlap between the clusters ($\sigma^2\to 0$), unsupervised learning achieves performances very close to supervised learning with all labels available while using only the labeled dataset fails.

From a practical perspective, we can try to draw parallels between our results and the state of the art in SSL but we need to keep in mind that our results only give best achievable performances on our toy model. In particular, even in a setting where our results predict that unsupervised learning achieves roughly the same performances as supervised learning with all labels, it might be very useful in practice to use a few labels in addition to all unlabeled data. Such an approach is 
presented in \cite{berthelot2019mixmatch} where extremely good performances are achieved for image classification with only a few labeled data per class and a new SSL algorithm: MixMatch. For example on CIFAR-10, with only 250 labeled images, MixMatch achieves an error rate of 11.08\% and with 4000 labeled images, an error rate of 6.24\% (to be compared with the 4.17\% error rate for the fully supervised training on all 50000 samples). These results are aligned with our finding about diminishing returns of labeled data points.

We make the following contributions:\\
{\bf Bayes risk:} to the best of our knowledge, our work is the first analytic computation of the Bayes risk in a high-dimensional Gaussian model in a semi-supervised setting.

{\bf Rigorous analysis:} our analysis builds on a series of recent works \cite{deshpande2016asymptotic,dia2016mutual,lelarge2019fundamental,miolane2017fundamental,barbier2019optimal} with tools from information theory and mathematical physics originally developed for the analysis of spin glasses \cite{panchenko2013sherrington,talagrand2010mean}.

The rest of the paper is organized as follows. Our model and the main result are presented in Section \ref{sec:model}. Related work is presented in Section \ref{sec:related}. In Section \ref{sec:proof}, we give an heuristic derivation of the main result and in Section \ref{sec:sketch}, we give a proof sketch  while the more technical details are presented in the supplementary material Section \ref{sec:supp}. We conclude in Section \ref{sec:con}

\section{Model and main results}\label{sec:model}
We now define our classification problem with two classes. The points $\bbf{Y}_1, \dots, \bbf{Y}_N$ of the dataset are in $\R^D$ and given by the following process:
$$
\bbf{Y}_j
=
V_j \bbf{U} + \sigma \bbf{Z}_j,\quad 1\leq j\leq N,
$$
where $\bbf{U} \sim \Unif(\mathbb{S}^{D-1})$, 
$\bbf{V} = (V_1, \dots, V_N) \iid \text{Unif}(-1,1)$
and $\bbf{Z}_1, \dots, \bbf{Z}_N \iid \cN(0,\text{Id}_D)$ are all independent.

In words, the dataset is composed of $N$ points in $\R^D$ divided into two classes with roughly equal sizes. The points with label $V_j = +1$ are centered around $+\bbf{U}\in \R^D$ and the points with label $V_j = -1$ are centered around $-\bbf{U}\in \R^D$. The parameter $\sigma$ controls the level of Gaussian noise around these centers.

In a semi-supervised setting, the statistician has access to some labels. We consider a case where each label is revealed with probability $\eta\in [0,1]$ independently of everything else. To fix notation, the side information is given by the following process:
$$
S_j = 
\begin{cases}
	V_j & \text{with probability} \ \eta \\
	0 & \text{with probability} \ 1-\eta.
\end{cases}
$$
If $S_j=0$, then the label of the $j$-th data point is unknown whereas if $S_j=\pm1$, it corresponds to the label of the $j$-th data point.

Finally, we consider the high-dimensional setting and all our results will be in a regime where $N,D\to \infty$ while the ratio $N/D$ tends towards a constant $\alpha >0$. Note that we are in a high noise regime since the squared norm of the signal is one whereas the squared norm of the noise is $\sigma^2 D \approx \sigma^2 N/\alpha$ where $N$ is the number of observations. 

To summarize, the three parameters of our model are: $\sigma^2>0$ the variance of the noise in the dataset, $\eta\in [0,1]$ the fraction of revealed labels and $\alpha>0$ the ratio between the number of data points (both labeled and unlabeled) and the dimension of the ambient space. We also assume that the statistician knows the priors, i.e. the distribution of $\bbf{U},\bbf{V}$ and $\bbf{Z}$.

The task of the statistician is to use the dataset $(\bbf{Y},\bbf{S})$ in order to make a prediction about the label of a new (unseen) data point. More formally, we define:
$$
\bbf{Y}_{\rm new} = V_{\rm new} \bbf{U} + \sigma\bbf{Z}_{\rm new},
$$
where $V_{\rm new} \sim \text{Unif}(-1,+1)$, $\bbf{Z}_{\rm new} \sim \cN(0,\text{Id}_D)$. We are interested in the minimal achievable error in our model, i.e. the Bayes risk:
$$
R^*_D(\eta)= \inf_{\hat{v}} \P\big(\what{v}(\bbf{Y}, \bbf{S}, \bbf{Y}_{\rm  new}) \neq V_{\rm new} \big)
$$
where the infimum is taken over all estimators (measurable functions of $\bbf{Y},\bbf{S},\bbf{Y}_{\rm new}$).

Our main mathematical achievement is an analytic formula for the Bayes risk $R^*_D$ in the large $D$ limit, see Theorem \ref{th:main} below. In order to state it, we need to introduce some additional notation. We start with some easy facts about our model.

\paragraph{Oracle risk}
Assume that the statistician knows the center of the clusters, i.e.\ has access to the ``oracle'' vector $\bbf{U}$. Then the best classification error would be achieved thanks to the simple thresholding rule $\sign(\langle \bbf{U}, \bbf{Y}_{\rm new}\rangle)$, where $\langle ., . \rangle$ denotes the Euclidean dot product. In this case, the risk is given by:
\begin{eqnarray}\label{eq:oracle}
	R_{\rm oracle} = \P\big( \sigma \langle \bbf{U},\bbf{Z}_{\rm new}\rangle>1\big) = \P\big(\sigma Z>1) = 1-\Phi\left( \frac{1}{\sigma}\right),
\end{eqnarray}
where $\Phi$ is the standard Gaussian cumulative distribution function.
We have of course $R_{\rm oracle} \leq R^*_D(\eta)$.

\paragraph{Fully supervised case}
Another instructive and simple case is the supervised case where $\eta =1$. Since all the $V_j$'s are known, we can assume wlog that they are all equal to one (multiply each $\bbf{Y}_j$ by $V_j$). More importantly, if we slightly modify the distribution of $\bbf{U}$ by taking $\bbf{U}=(U_1,\dots U_D)\iid \cN(0,1/D)$, this will not change the results for our model and makes the analysis easier by decorrelating each component. Indeed, denote by $Y_j$ (resp. $Z_j$) the first component of $\bbf{Y}_j$ (resp. $\bbf{Z}_j$) and by $U_1$ the first component of $\bbf{U}$. Then we have $N$ scalar noisy observations of the first component of $\bbf{U}$: $Y_j = U_1+\sigma Z_j$ for $1\leq j\leq N$, so that we can construct an estimate for $U_1$ by taking the average of the observations. We get:
$$
\overline{Y}_1 = \frac{1}{N}\sum_{j=1}^N Y_j = U_1+\frac{\sigma}{\sqrt{N}}\cN(0,1).
$$
Doing this for each component of $\bbf{U}$, we get an estimate of the vector $\bbf{U}$ and we now use it to get an estimate of $V_{\rm new}$. First define $\overline{\bbf{Y}} = (\overline{Y}_1,\dots, \overline{Y}_D)$ and consider
$$
\langle \bbf{Y}_{\rm new}, \overline{\bbf{Y}}\rangle = V_{\rm new}\langle \bbf{U}, \overline{\bbf{Y}}\rangle + \sigma\langle \bbf{Z}_{\rm new},\overline{\bbf{Y}}\rangle,
$$
and note that as $D\to \infty$, we have $\langle \bbf{U}, \overline{\bbf{Y}}\rangle \approx D\E[U_1\overline{Y}_1]=D\E\left[U_1^2\right]=1$ and $\langle \bbf{Z}_{\rm new},\overline{\bbf{Y}}\rangle \approx \sqrt{\E\left[ \|\overline{\bbf{Y}}\|^2\right]} Z\approx  \sqrt{\alpha+\sigma^2}/\sqrt{\alpha} \cN(0,1)$, so that we get:
$$
\langle \bbf{Y}_{\rm new}, \overline{\bbf{Y}}\rangle \approx V_{\rm new} + \frac{\sigma\sqrt{\alpha+\sigma^2}}{\sqrt{\alpha}} \cN(0,1).
$$
Our main result will actually show that estimating $V_{\rm new}$ with the sign of $\langle \bbf{Y}_{\rm new}, \overline{\bbf{Y}}\rangle$ is optimal so that we get:
\begin{equation}
\label{eq:risksupervised}
\lim_{N,D \to \infty}R^*_D(1) = \P\left(\frac{\sigma\sqrt{\alpha+\sigma^2}}{\sqrt{\alpha}}Z>1\right) = 1-\Phi\left( \frac{\sqrt{\alpha}}{\sigma\sqrt{\alpha+\sigma^2}}\right).
\end{equation}
A similar result was obtained in \cite{dobwag18} in a case where the covariance structure of the noise needs also to be estimated by the statistician resulting in a multiplicative term inside the $\Phi(.)$ function (see Theorem 3.1 and Corollary 3.3 in \cite{dobwag18}).

\paragraph{Unsupervised case}
In this paper, we concentrate on the case where $\eta>0$. When $\eta =0$, there is no side information and we are in an unsupervised setting studied in \cite{miolane2017fundamental}. Due to the symmetry of our model, we have $R^*_D=1/2$ because there is no way to guess the right classes $\pm 1$. In order to have a well-posed problem, the risk should be redefined as follows:
$$
R^*_D(0) =  \E_{\bbf Y}\left[\min_{s=\pm 1}\inf_{\hat{v}}\P\left(s\hat{v}(\bbf{Y}, \bbf{Y}_{\rm  new}) \neq V_{\rm new}|\bbf{Y}\right) \right]
$$
Although, this measure of performance is not the one studied in \cite{miolane2017fundamental}, we can adapt the argument to show that:
\begin{equation}
\label{eq:riskunsupervised}
\lim_{\eta\to 0}\lim_{N,D\to \infty}R^*_D(\eta) = \lim_{N,D\to \infty}R^*_D(0).
\end{equation}

\paragraph{Main result}
We now state our main result:
\\

\begin{theorem}\label{th:main}
	Let us define, for $\alpha, \sigma >0$, $\eta \in (0,1]$,
	\begin{equation}\label{eq:def_f}
	f_{\alpha,\sigma,\eta}(q)
	\defeq
	\alpha (1-\eta) \mathsf{i}_v(q / \sigma^2) + \frac{\alpha}{2\sigma^2}(1-q) - \frac{1}{2}\big(q  + \log(1-q)\big).
	\end{equation}
	Here $\mathsf{i}_v(\gamma) = \gamma - \E \log \cosh(\sqrt{\gamma} Z_0 + \gamma)$ where $Z_0 \sim \cN(0,1)$. 
	The function $f_{\alpha,\sigma,\eta}$ admits a unique minimizer $q^*(\alpha,\sigma,\eta)$ on $[0,1)$ and
	\begin{equation}
	R^*_D(\eta)
	\xrightarrow[ N,D \to \infty ]{} 1-\Phi(\sqrt{q^*(\alpha,\sigma,\eta)}/\sigma).
	\label{eq:Risk}
        \end{equation}
\end{theorem}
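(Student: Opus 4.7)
The plan is to reduce the Bayes risk to a one-parameter Gaussian scalar-channel problem whose effective signal-to-noise ratio is a limiting overlap $q^*$, and then to identify $q^*$ with the minimizer of $f_{\alpha,\sigma,\eta}$ through a replica-symmetric free-energy computation. I would organize the argument into three steps plus a check of the minimizer.

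\textbf{Step 1 (Reduction to a scalar channel).} Set $\what{\bbf{U}}\defeq\E[\bbf{U}\mid\bbf{Y},\bbf{S}]$. Because $V_{\rm new}$ has the symmetric $\pm 1$ prior and because $\|\bbf{U}\|=1$ kills the prior-normalization term in the likelihood, the posterior log-odds of $V_{\rm new}$ reduce to $\log\E[e^{\langle\bbf{Y}_{\rm new},\bbf{U}\rangle/\sigma^2}\mid\bbf{Y},\bbf{S}] - \log\E[e^{-\langle\bbf{Y}_{\rm new},\bbf{U}\rangle/\sigma^2}\mid\bbf{Y},\bbf{S}]$. Once Step~2 supplies overlap concentration, the posterior on $\bbf{U}$ is concentrated enough around $\what{\bbf{U}}$ that the Bayes classifier becomes asymptotically equivalent to $\sign\langle\bbf{Y}_{\rm new},\what{\bbf{U}}\rangle$. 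Decomposing
\begin{equation*}
\langle\bbf{Y}_{\rm new},\what{\bbf{U}}\rangle = V_{\rm new}\,\langle\bbf{U},\what{\bbf{U}}\rangle + \sigma\,\langle\bbf{Z}_{\rm new},\what{\bbf{U}}\rangle,
\end{equation*}
and using $\langle\bbf{Z}_{\rm new},\what{\bbf{U}}\rangle\mid\what{\bbf{U}}\sim\cN(0,\|\what{\bbf{U}}\|^2)$ together with the Nishimori identity $\E\langle\bbf{U},\what{\bbf{U}}\rangle=\E\|\what{\bbf{U}}\|^2$, the convergence in probability of both $\langle\bbf{U},\what{\bbf{U}}\rangle$ and $\|\what{\bbf{U}}\|^2$ to a common constant $q^*$ turns the left-hand side, conditional on $V_{\rm new}$, into $\cN(V_{\rm new}q^*,\sigma^2 q^*)$, from which $R_D^*(\eta)\to 1-\Phi(\sqrt{q^*}/\sigma)$ follows.

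\textbf{Step 2 (Free-energy formula and overlap concentration).} The matrix form $\bbf{Y}=\bbf{U}\bbf{V}^\top+\sigma\bbf{Z}$ exhibits the problem as a rank-one spiked-matrix estimation with Gaussian left-spike $\bbf{U}$, Rademacher right-spike $\bbf{V}$, and partial side information $\bbf{S}$ on $\bbf{V}$. Replacing $\Unif(\mathbb{S}^{D-1})$ by $\cN(0,I_D/D)$, asymptotically equivalent as in the fully-supervised paragraph, decouples the components of $\bbf{U}$, and the adaptive-interpolation method used for low-rank Bayesian estimation in \cite{lelarge2019fundamental,barbier2019optimal,miolane2017fundamental} should yield
\begin{equation*}
\frac{1}{D}\,I\bigl(\bbf{U},\bbf{V};\bbf{Y},\bbf{S}\bigr)\;\xrightarrow[N,D\to\infty]{}\;\min_{q\in[0,1)} f_{\alpha,\sigma,\eta}(q).
\end{equation*}
The three terms of $f_{\alpha,\sigma,\eta}$ then admit a transparent decoupled-channel interpretation: each of the $(1-\eta)N$ unrevealed $V_j$'s is equivalent to a Rademacher-in-Gaussian scalar channel at SNR $q/\sigma^2$ with mutual information $\mathsf{i}_v(q/\sigma^2)$; each component of $\bbf{U}$ is equivalent to a Gaussian-in-Gaussian scalar channel whose replica-symmetric contribution is $\tfrac{\alpha}{2\sigma^2}(1-q)-\tfrac12(q+\log(1-q))$; and the $\eta N$ revealed labels contribute no entropy. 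I--MMSE then identifies $1-q^*$ with the per-coordinate MMSE of $\bbf{U}$, and a standard small-perturbation argument (in the spirit of \cite{lelarge2019fundamental}) upgrades this to convergence in probability of both $\langle\bbf{U},\what{\bbf{U}}\rangle$ and $\|\what{\bbf{U}}\|^2$ to $q^*$.

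\textbf{Step 3 (Uniqueness of $q^*$).} Direct differentiation of $f_{\alpha,\sigma,\eta}$ gives the first-order condition
\begin{equation*}
\frac{q}{1-q} \;=\; \frac{\alpha}{\sigma^2}\Bigl(1-2(1-\eta)\,\mathsf{i}'_v\bigl(q/\sigma^2\bigr)\Bigr).
\end{equation*}
Since $\mathsf{i}'_v(\gamma)=\tfrac12\,\mathrm{MMSE}(\gamma)$ for the Rademacher-in-Gaussian scalar channel is strictly decreasing in $\gamma$, the right-hand side is strictly increasing in $q\in[0,1)$ and ranges over $[\tfrac{\alpha\eta}{\sigma^2},\tfrac{\alpha}{\sigma^2}]$, whereas the left-hand side is a strictly increasing bijection $[0,1)\to[0,\infty)$. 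For every $\eta>0$ the two curves meet at exactly one point, giving the unique minimizer $q^*$.

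\textbf{Main obstacle.} The technical heart is Step~2: the two sides of the rank-one spike carry very different priors (continuous Gaussian versus discrete Rademacher), the Rademacher side is only partially revealed, and the adaptive-interpolation scheme therefore has to evolve two coupled order parameters while accommodating the side-information channel $\bbf{S}$. Matching Guerra-type upper and lower bounds at the single scalar order parameter $q^*$, and strengthening the I--MMSE identity into the \emph{sharp} overlap concentration required by Step~1 (without interference from a first-order transition, which is precisely where $\eta>0$ matters), is where the bulk of the supplementary work will be concentrated.
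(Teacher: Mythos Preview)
Your architecture is the paper's: a replica-symmetric mutual-information limit (your Step~2 $\leftrightarrow$ Theorem~\ref{th:mi}) yields overlap concentration (Lemma~\ref{lem:lim_overlaps}), which is then fed into a computation of the limiting log-likelihood ratio (your Step~1 $\leftrightarrow$ Theorem~\ref{th:ratio}). Two points need tightening.

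In Step~1, the phrase ``the posterior on $\bbf{U}$ is concentrated enough around $\what{\bbf{U}}$'' is misleading and will not drive a rigorous proof: posterior samples satisfy $\|\bbf{u}\|=1$ while $\|\widebar{\bbf{u}}\|^2\to q^*<1$ in general, so the posterior is \emph{not} concentrated near a point. What concentrates are the pairwise overlaps $\langle\bbf{u}^{(1)},\bbf{u}^{(2)}\rangle$, $\langle\bbf{u}^{(1)},\bbf{U}\rangle$, $\langle\bbf{u}^{(1)},\widebar{\bbf{u}}\rangle$, all at $q^*$. The paper's actual device (Lemma~\ref{lem:ll}) is a second-moment computation: with $A_N(v)=\int e^{v\langle\bbf{Y}_{\rm new},\bbf{u}\rangle/\sigma^2}dP(\bbf{u}\,|\,\bbf{Y},\bbf{S})$ and $B_N(v)=\exp\big(\tfrac{v}{\sigma}\langle\widebar{\bbf{u}},\bbf{Z}_{\rm new}\rangle+\tfrac{vV_{\rm new}}{\sigma^2}q^*\big)$, one integrates out the independent Gaussian $\bbf{Z}_{\rm new}$ in each of $\E[A_N^2]$, $\E[B_N^2]$, $\E[A_NB_N]$; what remains are expectations of exponentials of overlaps, which all converge to the same limit by Lemma~\ref{lem:lim_overlaps}. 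This gives $A_N-B_N\to 0$ in $L^2$, hence $\log A_N-\log B_N\to 0$ in $L^1$, and the Gaussian limit of the log-odds follows. No pointwise posterior concentration is used or needed.

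In Step~3, your uniqueness argument has a genuine gap: two strictly increasing continuous functions can cross arbitrarily many times, so monotonicity of both sides of $\tfrac{q}{1-q}=\tfrac{\alpha}{\sigma^2}\big(1-(1-\eta)\,\mmse_v(q/\sigma^2)\big)$ does not by itself force a unique solution. The missing ingredient is curvature. The paper (Proposition~\ref{prop:minimizer}) rewrites the first-order condition as $q=F(q)$ with $F(q)=\frac{\alpha(\eta+(1-\eta)h(q/\sigma^2))}{\sigma^2+\alpha(\eta+(1-\eta)h(q/\sigma^2))}$ and $h(\gamma)=\E\tanh(\sqrt{\gamma}Z+\gamma)$; since $h$ is concave \cite[Lemma~6.1]{deshpande2016asymptotic}, $F$ is concave with $F(0)>0$ and $F<1$, so $q\mapsto q-F(q)$ is convex, negative at $0$, positive near $1$, and vanishes exactly once. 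In your own formulation the same fix works: the right-hand side is concave (because $\mmse_v$ is convex) while $q/(1-q)$ is convex, so their difference is convex and changes sign exactly once.
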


As a result from our proof, we will prove that a very simple algorithm is optimal (asymptotically in $N,D$). Namely, we define $\widebar{\bbf{u}} = \E[\bbf{U}|\bbf{Y},\bbf{S}]$, the posterior mean of $\bbf{U}$ given the observations. Then taking $ \what{v} = \sign(\langle \overline{\bbf{u}}, \bbf{Y}_{\rm new}\rangle)$ is an optimal estimator of $V_{\rm new}$ in the sense that
\begin{eqnarray*}
\P (V_{\rm new} \neq \what{v}) = R^*_D(\eta) + o_N(1).
\end{eqnarray*}
Of course, from a practical point of view, computing an estimate for $\widebar{\bbf{u}}= \E[\bbf{U}|\bbf{Y},\bbf{S}]$ is not an easy task (except in the supervised setting) and approximations need to be made.

\begin{figure}[thpb]
	\hspace{-2.1cm}
	\includegraphics[width=1.3 \linewidth]{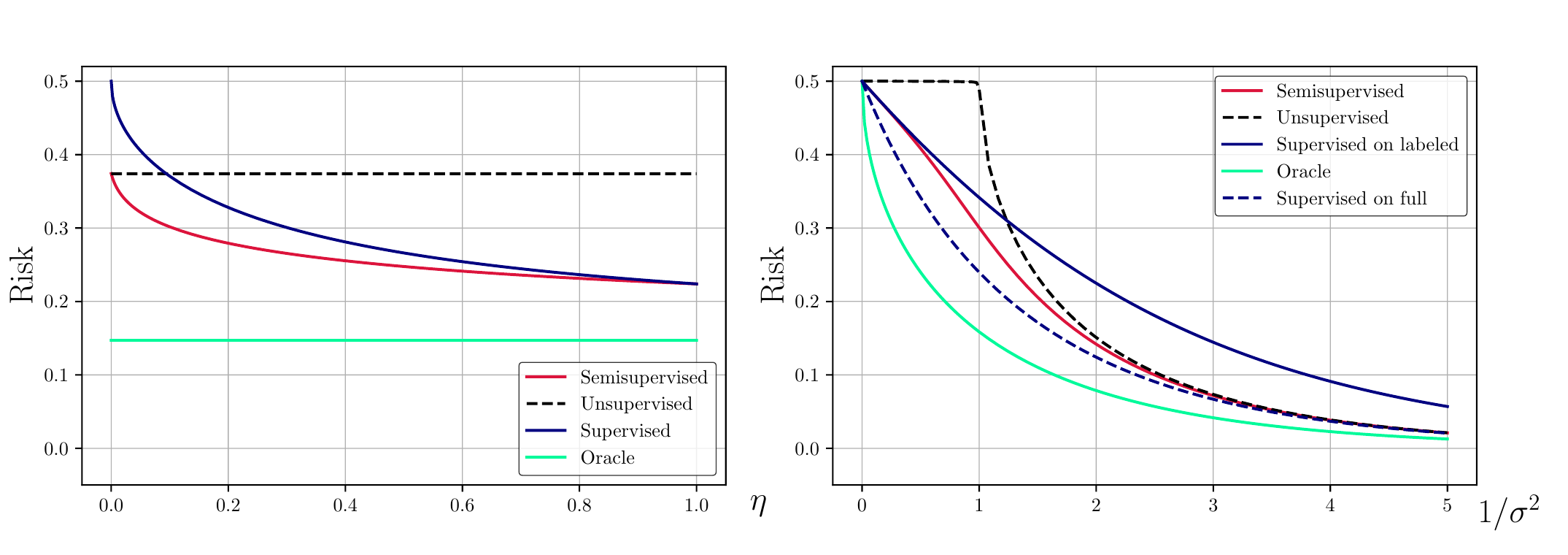}
	\caption{Left: Bayes risks as a function of the fraction of labeled data $\eta$, with noise $\sigma^2=0.9$ and ambient dimension equals to the number of samples $\alpha=1$. Right: Bayes risks as a function of the inverse of the noise $1/\sigma^2$, with fraction of labeled data $\eta=0.2$, and ambient dimension equals to the number of samples $\alpha=1$.}
	\label{fig:comp}
\end{figure}

Figure \ref{fig:comp} gives examples of our main results with comparison of the various settings. The semi-supervised curve corresponds to the formula \eqref{eq:Risk} where a fraction $\eta$ of the data points have labels and are used with all unlabeled data points. The supervised on full curve corresponds to \eqref{eq:risksupervised} where all the labels are used. The supervised on labeled curve corresponds to \eqref{eq:risksupervised} with the parameter $\alpha$ replaced by $\alpha\eta$ and is the best possible performance when only a fraction $\eta$ of the data points having labels are used. The unsupervised curve corresponds to \eqref{eq:riskunsupervised} where all the data points are used but without any label. Finally, the oracle curve corresponds to \eqref{eq:oracle} where the centers of the clusters are known (corresponding to the case $\alpha\to \infty$). In the left of Figure \ref{fig:comp}, we clearly see that the first labeled data points (i.e.\ when $\eta$ is small) decreases greatly the risk of semi-supervised learning. This corresponds to the diminishing return of the labeled data.
In the right plot of Figure \ref{fig:comp}, we see that in the high-noise regime, unsupervised learning fails and that its risk decreases as soon as $\sigma^2<1$. This phenomena is known as the BBP phase transition \cite{baik2005phase,baik2006eigenvalues,paul2007asymptotics}. We see that below this transition, the unlabeled data are of little help as the performance of SSL almost match the performance of supervised learning on labeled data only. Moreover after the transition, unsupervised learning reaches quite quickly the performance of SSL. In other words, the regime most favorable to SSL in term of noise corresponds precisely to the regime around the BBP phase transition where unsupervised learning is still not very good while supervised learning on labeled data saturates.

\section{Related work}\label{sec:related}
The unsupervised version of our problem is the standard Gaussian mixture model used in statistics \cite{friedman2001elements}. In the regime considered here (dimension and number of samples tending to infinity), there are a number of recent works dealing with the clustering problem of Gaussian mixtures. However, a large part of them considers scenarios where $\alpha \to \infty$ or $\sigma \to 0$.
In the regime where $\alpha = O(1)$ i.e.\ where the number of observations is proportional to the dimension, spectral clustering has been extensively studied. In this regime it is known that the leading eigenvector of the sample covariance matrix encounters a phase transition \cite{baik2005phase,baik2006eigenvalues,paul2007asymptotics}: there exists a critical value of the noise intensity below which the leading eigenvector starts to be correlated with the centers of the clusters.
Using  exact  but  non-rigorous  methods  from  statistical physics, \cite{barkai1994statistical,lesieur2016phase} determine the critical values for $\alpha$ and $\sigma$ at which it becomes information-theoretically possible  to  reconstruct  the  membership  into  clusters  better than chance. 
Rigorous results on this model are given in \cite{banks2018information} where bounds on the critical values are obtained. The precise thresholds were then determined in \cite{miolane2017fundamental}. Our analysis builds on the techniques derived in this last reference with two main modifications: additional work is required to compute the classification accuracy (as opposed to the mean squared error) and to incorporate the side information. 

To the best of our knowledge, there are much fewer theoretical works dealing with a semi-supervised learning in a high-dimensional setting. \cite{cast95} shows the error converges exponentially fast in the number of labeled examples if the mixture model is identifiable (see \cite{singh09} for an extension of these results). \cite{castelli1996relative} studies a mixture model where the estimation problem is essentially reduced to the one of estimating the mixing parameter and shows that the information content of unlabeled examples decreases as classes overlap. \cite{coz03} shows that unlabeled data can lead to an increase in classification error in a case where the model is incorrect. A similar conclusion is obtained in \cite{kri18} for linear classifiers defined by convex margin-based sur-rogate losses. In contrast, our work computes the asymptotic Bayes risk for which unlabeled data can only improve the best achievable performance.
More closely related to our work, \cite{NIPS2018_8077}
provides the first information theoretic tight analysis for inference of latent community  structure  given  a  dense graph along with high dimensional node covariates, correlated with the same latent communities. \cite{mai2018random} studies a class of graph-oriented semi-supervised learning algorithms in the limit of large and numerous data similar to our setting.

In contrast, there are a number of practical works and proposed algorithms for semi-supervised learning based on transductive models \cite{joachims2003transductive}, graph-based method \cite{zhu2003semi} or generative modeling \cite{belkin2002laplacian}, see the surveys \cite{zhu2005semi} and \cite{chapelle2009semi}. SSL methods based on training a neural network by adding an additional loss term to ensure consistency regularization are presented in \cite{grandvalet2005semi}, \cite{kingma2014semi}, \cite{verma2019interpolation}.
We refer in particular to the recent work \cite{oliver2018realistic} for an overview of these SSL methods (currently the state-of-the-art for SSL on image classification datasets). The algorithm MixMAtch introduced in \cite{berthelot2019mixmatch} obtains impressive results on all standard image benchmarks. Given these recent improvements, natural questions arise: what is the best possible achievable performance? to what extend can we generalize those improvement to other domains? We believe that our work is a first step in a theoretical understanding of these questions.

\section{Heuristic derivation of the main result} \label{sec:proof}
We present now an heuristic derivation of our results, based on the ``cavity method'' \cite{mezard1987spin} from statistical physics.
Let $\widebar{\bbf{u}} = \E[\bbf{U}|\bbf{Y},\bbf{S}]$ and $\widebar{\bbf{v}} = \E[\bbf{V}|\bbf{Y},\bbf{S}]$ be the optimal estimators (in term of mean squared error) for estimating $\bbf{U}$ and $\bbf{V}$.
A natural hypothesis is to assume that the correlation $\langle \widebar{\bbf{u}}, \bbf{U} \rangle$ converges as $N,D \to \infty$ to some deterministic limit $q_u^* \in [0,1]$ and that $\frac{1}{N} \langle \widebar{\bbf{v}}, \bbf{V} \rangle \to q_v^* \in \R$.

The conditional expectation $\widebar{\bbf{u}} = \E[\bbf{U}|\bbf{Y},\bbf{S}]$ is the orthogonal projection (in $L^2$ sense) of the random vector $\bbf{U}$ onto the subspace of $\bbf{Y},\bbf{S}$-measurable random variables. The squared $L^2$ norm of the projection $\widebar{\bbf{u}}$ is equal to the scalar product of the vector $\bbf{U}$ with its projection $\widebar{\bbf{u}}$: $\E \|\widebar{\bbf{u}} \|^2 = \E \langle \widebar{\bbf{u}}, \bbf{U} \rangle$. Assuming that $\| \widebar{\bbf{u}} \|^2$ also admits a deterministic limit, this limits is then equal to $q_u^*$. We get for large $N$ and $D$, $\|\widebar{\bbf{u}}\|^2 \simeq \langle \widebar{\bbf{u}}, \bbf{U} \rangle \simeq q_u^*$. Analogously we have $\frac{1}{N} \| \widebar{\bbf{v}} \|^2 \simeq \frac{1}{N} \langle \widebar{\bbf{v}}, \bbf{V} \rangle \simeq q_v^*$.

We will show below that $q_u^*$ and $q_v^*$ obey some fixed point equations that allow to determine them.

As seen above, if we aim at estimating a label $V_i$ that we did not observe (i.e.\ $S_i = 0$) given $\bbf{Y}, \bbf{S}$ and the ``oracle'' $\bbf{U}$, we compute the sufficient statistic $\widetilde{Y}_i = \langle \bbf{Y}_i, \bbf{U} \rangle =  V_i + \sigma \cN(0,1)$. 
The estimator that minimizes the probability of error $\P(\what{v} \neq V_i)$ is simply $\what{v}_i = \sign(\widetilde{Y}_i)$. The one that minimizes the mean squared error (MSE) is $\what{v}_i = \E[V_i|\widetilde{Y}_i]$ which achieves a MSE of
$$
\E [(V_i - \what{v}_i)^2] = \mmse_v(1/\sigma^2)
$$
where we define for $(V,Z) \sim \Unif(-1,+1) \otimes \cN(0,1)$ and $\gamma >0$ (see Section \ref{sec:gauss} for more details):
$$
\mmse_v(\gamma) \defeq \E\big[(V - \E[V|\sqrt{\gamma} V + Z ])^2\big].
$$
In the case where we do not have access to the oracle $\bbf{U}$, one can still use $\widebar{\bbf{u}}$ as a proxy. We repeat the same procedure assuming that $\langle \widebar{\bbf{u}}, \bbf{Y}_i \rangle$ is a sufficient statistic for estimating $V_i$. 
Although this is not strictly true, we shall see that this leads to the correct fixed point equations for $q_u^*,q_v^*$.
Compute
$$
\langle \widebar{\bbf{u}}, \bbf{Y}_i \rangle 
\ = \ \langle \widebar{\bbf{u}}, \bbf{U} \rangle V_i + \sigma \langle \widebar{\bbf{u}}, \bbf{Z}_i \rangle
\ \simeq \ q_u^* V_i + \sigma \langle \widebar{\bbf{u}}, \bbf{Z}_i \rangle.
$$
The posterior mean $\widebar{\bbf{u}}$ is not expected to depend much on the particular point $\bbf{Y}_i$ and therefore on $\bbf{Z}_i$. This gives that the random vectors $\widebar{\bbf{u}}$ and  $\bbf{Z}_i$ are approximately independent. Hence the distribution of $\langle \widebar{\bbf{u}}, \bbf{Z}_i \rangle$ is roughly $\cN(0,q_u^*)$ (we recall that $\|\widebar{\bbf{u}} \|^2 \simeq q_u^*$). We get
\begin{equation}\label{eq:approx_law}
\frac{1}{\sigma \sqrt{q_u^*}}
\langle \widebar{\bbf{u}}, \bbf{Y}_i \rangle 
\simeq \sqrt{q_u^*/\sigma^2} V_i + Z
\end{equation}
in law, where $Z \sim \cN(0,1)$. The best estimator $\what{v}_i$ (in terms of MSE) one can then construct using $\langle \widebar{\bbf{u}}, \bbf{Y}_i \rangle$ achieves a MSE of
$$
\E [(V_i - \what{v}_i)^2] \simeq \mmse_v(q_u^*/\sigma^2).
$$
We assumed that $\langle \widebar{\bbf{u}}, \bbf{Y}_i \rangle$ is a sufficient statistic for estimating $V_i$, therefore $\what{v}_i = \widebar{v}_i$.
For all the $\eta N$ indices $i$ such that $S_i = V_i$ we have obviously $\widebar{v}_i = V_i$. Hence
\begin{align*}
\frac{1}{N} \E \|\bbf{V} - \widebar{\bbf{v}} \|^2
&= 
\frac{1}{N} \sum_{i | S_i = 0} \E \big[(V_i - \widebar{v}_i)^2\big]
=
\frac{1}{N} \sum_{i | S_i = 0} \E \big[(V_i - \what{v}_i)^2\big]
\\
&\simeq
\frac{1}{N} \sum_{i | S_i = 0}\mmse_v(q_u^*/\sigma^2)
\simeq
(1-\eta)\mmse_v(q_u^*/\sigma^2).
\end{align*}
Since we have $\frac{1}{N} \E \|\bbf{V} - \widebar{\bbf{v}} \|^2 \simeq 1 - q_v^*$, we get
$$
1 - q_v^* \simeq (1-\eta)\mmse_v(q_u^* /\sigma^2).
$$

We can do the same reasoning with $\widebar{\bbf{u}}$ instead of $\widebar{\bbf{v}}$. We denote by $\bbf{R}_i$ (resp. $\tilde{\bbf{Z}}_i$) the $i$-th row of the matrix $\bbf{Y}$ (resp. $\bbf{Z}$), so that we have
$$
\bbf{R}_i = U_i\bbf{V} + \sigma \tilde{\bbf{Z}}_i.
$$
Hence taking the scalar product with $\frac{1}{N}\widebar{\bbf{v}}$ gives
$$
\frac{1}{\sqrt{N\sigma^2 q^*_v}}\langle \widebar{\bbf{v}},\bbf{R}_i\rangle \simeq \sqrt{\frac{\alpha q^*_v}{\sigma^2}}\sqrt{D} U_i + Z,
$$
in law, where $Z \sim \cN(0,1)$. Recall that $D\E[U_i^2]=1$.
Making the same assumption as above, the best estimator $\what{u}_i$ one can construct using $\langle \widebar{\bbf{v}},\bbf{R}_i\rangle$ achieves a MSE of
$$
\E[D(U_i - \what{u}_i)^2] \simeq\mmse_u(\alpha q_v^* /\sigma^2),
$$
where $\mmse_u(\gamma) = \E [(U-\E[U|\sqrt{\gamma}U + Z])^2]$ for $U,Z \iid \cN(0,1)$.
This leads to
$$
\E[\| \bbf{U}-\widebar{\bbf{u}}\|^2] = 1- q_u^* \simeq \mmse_u(\alpha q_v^* /\sigma^2),
$$

As shown in Section \ref{sec:gauss}, we have
$$
\mmse_u(\gamma) = \frac{1}{1+\gamma}.
$$
We conclude that $(q_u^*,q_v^*)$ satisfies the following fixed point equations:
\begin{align}
	q_v^* &= 1 - (1-\eta)\mmse_v(q_u^* /\sigma^2) \label{eq:SE1} \\
	q_u^* &= \frac{\alpha q_v^*}{\sigma^2 + \alpha q_v^*}. \label{eq:SE2}
\end{align}
We introduce the following mutual information
\begin{equation}\label{eq:def_mi_scalar}
\mathsf{i}_v(\gamma) = I(V_0; \sqrt{\gamma} V_0 + Z_0)
\end{equation}
where $V_0 \sim \Unif(-1,+1)$ and $Z_0 \sim \cN(0,1)$ are independent. An elementary computation leads to (see Section \ref{sec:gauss})
\begin{equation}\label{eq:exp_mi_scalar}
\mathsf{i}_v(\gamma) = \gamma - \E \log \cosh(\sqrt{\gamma} Z_0 + \gamma).
\end{equation}
By the ``I-MMSE'' Theorem from \cite{guo2004mutual}, $\mathsf{i}_v$ is related to $\mmse_v$:
\begin{equation}\label{eq:i_mmse}
\mathsf{i}_v'(\gamma) = \mmse_v(\gamma).
\end{equation}
Let us compute the derivative of $f_{\alpha,\sigma,\eta}$ defined by \eqref{eq:def_f}, using \eqref{eq:i_mmse}:
$$
f'_{\alpha,\sigma,\eta}(q)
= \frac{\alpha}{2 \sigma^2} (1-\eta) \mmse_v(q/\sigma^2) - \frac{\alpha}{2\sigma^2} + \frac{q}{2(1-q)}.
$$
Using \eqref{eq:SE1}-\eqref{eq:SE2}, one verifies easily that $f'_{\alpha,\sigma,\eta}(q_u^*) = 0$.
By Proposition \ref{prop:minimizer} (proved in Section \ref{sec:tech}),
$f_{\alpha,\sigma,\eta}$ admits a unique critical point on $[0,1)$ which is its unique minimizer: $q_u^*$ is therefore the minimizer of $f_{\alpha,\sigma,\eta}$.

If we now want to estimate $V_{\rm new}$ from $\bbf{Y}, \bbf{S}$ and $\bbf{Y}_{\rm new}$ we assume, as above that $\langle \widebar{\bbf{u}}, \bbf{Y}_{\rm new} \rangle$ is a sufficient statistic. As for \eqref{eq:approx_law}, we have
$$
\frac{1}{\sigma \sqrt{q_u^*}}
\langle \widebar{\bbf{u}}, \bbf{Y}_{\rm new} \rangle 
\simeq \sqrt{q_u^*/\sigma^2} V_{\rm new} + Z
$$
in law, where $Z \sim \cN(0,1)$ is independent of $V_{\rm new}$. The Bayes classifier is then
\begin{eqnarray}
\label{eq:hatv}\what{v} = \sign(\langle \overline{\bbf{u}}, \bbf{Y}_{\rm new}\rangle),
\end{eqnarray}
hence
$$
R^*_D(\eta) = \P (V_{\rm new} \neq \what{v}) \simeq 1 -\Phi(\sqrt{q_u^*}/ \sigma),
$$
which is the statement of our main Theorem \ref{th:main} above.

\section{Proof sketch}\label{sec:sketch}

From now we simply write $q^*$ instead of $q^*(\alpha,\sigma,\eta)$.
The next theorem computes the limit of the log-likelihood ratio.
\begin{theorem}\label{th:ratio}
	Conditionally on $V_{\rm new} = \pm 1$,
	\begin{align*}
		\log
	\frac{P(V_{\rm new} = +1 | \bbf{Y},\bbf{S},\bbf{Y}_{\rm new})}{P(V_{\rm new} = -1 | \bbf{Y},\bbf{S},\bbf{Y}_{\rm new})}
	\xrightarrow[N,D \to \infty]{(d)}
	\cN(\pm 2q^*/\sigma^2, 4q^*/\sigma^2).
	\end{align*}
\end{theorem}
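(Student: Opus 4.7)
The plan is to reduce the LLR to a single scalar projection of $\bbf{Y}_{\rm new}$ onto the posterior mean $\widebar{\bbf{u}} = \E[\bbf{U}|\bbf{Y},\bbf{S}]$, and then identify its limit by combining the independence of $\bbf{Z}_{\rm new}$ from the training data with the concentration of the self-overlap at $q^*$. Bayes' rule, the uniform prior on $V_{\rm new}$, and the constraint $\|\bbf{U}\|=1$ immediately give
$$
\mathcal{L} \defeq \log \frac{P(V_{\rm new}=+1 | \bbf{Y},\bbf{S},\bbf{Y}_{\rm new})}{P(V_{\rm new}=-1 | \bbf{Y},\bbf{S},\bbf{Y}_{\rm new})}
= \log \frac{\langle e^{\langle \bbf{Y}_{\rm new}, \bbf{u}\rangle/\sigma^2}\rangle}{\langle e^{-\langle \bbf{Y}_{\rm new}, \bbf{u}\rangle/\sigma^2}\rangle},
$$
where $\langle\cdot\rangle$ is the posterior expectation over $\bbf{u}$ given $(\bbf{Y},\bbf{S})$. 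Writing $Y(\bbf{u}) = \langle \bbf{Y}_{\rm new}, \bbf{u}-\widebar{\bbf{u}}\rangle/\sigma^2$, which satisfies $\langle Y\rangle=0$, one gets the decomposition
$$
\mathcal{L} = \frac{2}{\sigma^2}\langle \bbf{Y}_{\rm new}, \widebar{\bbf{u}}\rangle + \Delta_D, \qquad \Delta_D \defeq \log\langle e^{Y}\rangle - \log\langle e^{-Y}\rangle.
$$

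I would next show the Gibbs-fluctuation remainder $\Delta_D$ vanishes in probability. Since $\langle Y\rangle=0$, expanding the two cumulant generating functions gives $\Delta_D = 2\sum_{n\geq 1}\kappa_{2n+1}(Y)/(2n+1)!$, where the $\kappa_k$ are Gibbs cumulants of $Y$, so it suffices to control the odd central moments $\langle Y^{2k+1}\rangle$. Each such moment expands as a $(2k+1)$-fold integral against independent posterior replicas $\bbf{u}^{(1)},\dots,\bbf{u}^{(2k+1)}$, and is a polynomial in their pairwise overlaps $\langle \bbf{u}^{(a)},\bbf{u}^{(b)}\rangle$; in the replica-symmetric regime every such overlap concentrates on $q^*$, so the joint distribution of the $Y(\bbf{u}^{(a)})$ becomes Wick-Gaussian with identically vanishing odd moments.

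For the leading term, under $V_{\rm new}=v\in\{\pm 1\}$,
$$
\frac{2}{\sigma^2}\langle \bbf{Y}_{\rm new}, \widebar{\bbf{u}}\rangle = \frac{2v}{\sigma^2}\langle \bbf{U}, \widebar{\bbf{u}}\rangle + \frac{2}{\sigma}\langle \bbf{Z}_{\rm new}, \widebar{\bbf{u}}\rangle.
$$
The projection identity $\E[\langle \bbf{U}, \widebar{\bbf{u}}\rangle \mid \bbf{Y},\bbf{S}] = \|\widebar{\bbf{u}}\|^2$ together with concentration of $\|\widebar{\bbf{u}}\|^2$ at $q^*$ forces the first summand to converge in probability to $2vq^*/\sigma^2$. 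Because $\bbf{Z}_{\rm new}$ is independent of $(\bbf{Y},\bbf{S})$ and hence of $\widebar{\bbf{u}}$, conditionally on $\widebar{\bbf{u}}$ the second summand is Gaussian with mean zero and variance $4\|\widebar{\bbf{u}}\|^2/\sigma^2 \to 4q^*/\sigma^2$. Combining via Slutsky with $\Delta_D = o_P(1)$ yields $\mathcal{L} \xrightarrow{(d)} \cN(2vq^*/\sigma^2, 4q^*/\sigma^2)$ conditionally on $V_{\rm new}=v$, which is the claim.

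The main obstacle is the vanishing of $\Delta_D$: concentration of a single self-overlap is not enough, one needs simultaneous concentration of multi-replica overlap correlators to kill all the higher odd central moments of $Y$. This is the genuinely non-trivial part of the replica-symmetric machinery developed in \cite{miolane2017fundamental,lelarge2019fundamental} (via adaptive interpolation and small Gaussian perturbations of the Hamiltonian producing Ghirlanda--Guerra-type identities), which must be adapted here to incorporate the labeled side information $\bbf{S}$.
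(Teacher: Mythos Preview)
Your decomposition $\mathcal{L}=\frac{2}{\sigma^2}\langle \bbf{Y}_{\rm new},\widebar{\bbf{u}}\rangle+\Delta_D$ and the analysis of the leading term are correct and match the paper. The gap is in your treatment of $\Delta_D$. The claim that the odd Gibbs moments $\langle Y^{2k+1}\rangle$ ``expand as $(2k+1)$-fold integrals against independent posterior replicas'' and are ``polynomials in their pairwise overlaps'' is not right: $\langle Y^{2k+1}\rangle$ is a single-replica Gibbs average of $\big(\langle \bbf{Y}_{\rm new},\bbf{u}-\widebar{\bbf{u}}\rangle/\sigma^2\big)^{2k+1}$, and for fixed $(\bbf{Y},\bbf{S},\bbf{Y}_{\rm new})$ it is \emph{not} a function of replica overlaps alone --- it depends on the full posterior through the direction $\bbf{Y}_{\rm new}$. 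Concentration of pairwise (or even multi-replica) overlaps therefore does not by itself force these odd moments to vanish, and no Wick/Gaussian structure emerges from overlap concentration alone. On top of this, $|Y|$ is only $O(1)$ in a typical sense (its range is $O(\sqrt{D})$), so the cumulant series has no uniform radius of convergence and cannot be truncated without further work. In short, the cumulant-expansion route as written does not close.

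The paper avoids all of this with a much lighter argument. It does not try to control $\Delta_D$ term by term; instead it compares $A_N(v)=\big\langle e^{\frac{v}{\sigma^2}\langle \bbf{Y}_{\rm new},\bbf{u}\rangle}\big\rangle$ directly to the explicit variable $B_N(v)=\exp\big(\frac{v}{\sigma}\langle \widebar{\bbf{u}},\bbf{Z}_{\rm new}\rangle+\frac{vV_{\rm new}}{\sigma^2}q^*\big)$ in $L^2$, by computing $\E[A_N^2]$, $\E[B_N^2]$ and $\E[A_NB_N]$. After integrating out the Gaussian $\bbf{Z}_{\rm new}$ first, each of these is the expectation of a bounded exponential of a few \emph{pairwise} overlaps ($\langle \bbf{u}^{(1)},\bbf{u}^{(2)}\rangle$, $\langle \bbf{U},\bbf{u}^{(1)}\rangle$, $\langle \widebar{\bbf{u}},\bbf{u}^{(1)}\rangle$, $\|\widebar{\bbf{u}}\|^2$), all of which concentrate at $q^*$ by a short Jensen argument (Lemma~\ref{lem:lim_overlaps}) once $\langle \bbf{U},\bbf{u}\rangle\to q^*$ is known. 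No multi-replica overlap control, no Ghirlanda--Guerra identities, and no cumulant expansion are needed; two replicas suffice. Then $|\log A_N-\log B_N|\le (A_N^{-1}+B_N^{-1})|A_N-B_N|$ and Cauchy--Schwarz transfer the $L^2$ closeness to the log, which immediately gives the limiting Gaussian for $\mathcal{L}$.
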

Before sketching the proof of this theorem, we show how Theorem \ref{th:main} follows from Theorem \ref{th:ratio} above.
The optimal estimator for $V_{\rm new}$ is given by the sign of the log-likelihood ratio, hence we get:
\begin{eqnarray*}
\lim_{N\to \infty} 1-R^*_D(\eta) = P\left( \cN( 2q^*/\sigma^2, 4q^*/\sigma^2)>0\right) = \Phi(\sqrt{q^*}/\sigma)
\end{eqnarray*}

\begin{proof}
	Let us look at the posterior distribution of $V_{\rm new}, \bbf{U}$ given $\bbf{Y}, \bbf{S}, \bbf{Y}_{\rm new}$, i.e.
	From Bayes rule we get
	\begin{align*}
	\frac{P(V_{\rm new} = +1 | \bbf{Y},\bbf{S},\bbf{Y}_{\rm new})}{P(V_{\rm new} = -1 | \bbf{Y},\bbf{S},\bbf{Y}_{\rm new})}
	&= \frac{\int \exp\big(-\frac{1}{2 \sigma^2}\| \bbf{Y}_{\rm new} - \bbf{u}\|^2\big) dP(\bbf{u}|\bbf{Y},\bbf{S})}{\int \exp\big(-\frac{1}{2 \sigma^2}\| \bbf{Y}_{\rm new} + \bbf{u}\|^2\big) dP(\bbf{u}|\bbf{Y},\bbf{S})}
	\\
	&= \frac{\int \exp\big(\frac{1}{\sigma^2}\langle \bbf{Y}_{\rm new} , \bbf{u}\rangle \big) dP(\bbf{u}|\bbf{Y},\bbf{S})}{\int \exp\big(-\frac{1}{\sigma^2}\langle \bbf{Y}_{\rm new} , \bbf{u}\rangle \big) dP(\bbf{u}|\bbf{Y},\bbf{S})}
	\end{align*}
	Let $\widebar{\bbf{u}} = \E[\bbf{U}|\bbf{Y},\bbf{S}]$. The following lemma is proved in the supplementary material, see Section \ref{sec:tech}.
	\begin{lemma}\label{lem:lim_overlaps}
Let $\bbf{u}^{(1)}, \bbf{u}^{(2)}$ be i.i.d.\ samples from the posterior distribution of $\bbf{U}$ given $\bbf{Y},\bbf{S}$, independently of everything else. Then
		$$
		\|\widebar{\bbf{u}}\|^2 \xrightarrow[N,D \to \infty]{} q^*, \quad
		\langle \widebar{\bbf{u}}, \bbf{u}^{(1)} \rangle \xrightarrow[N,D \to \infty]{} q^*, \quad
		\langle \bbf{U}, \bbf{u}^{(1)} \rangle \xrightarrow[N,D \to \infty]{} q^*, \quad
		\langle \bbf{u}^{(1)}, \bbf{u}^{(2)} \rangle \xrightarrow[N,D \to \infty]{} q^*.
		$$
	\end{lemma}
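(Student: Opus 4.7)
The plan is to combine the Nishimori identity with convergence of the normalized mutual information and a standard overlap-concentration argument. First I would observe that in the Bayes-optimal setting $\bbf{U}$ and the posterior replicas $\bbf{u}^{(1)}, \bbf{u}^{(2)}$ are, conditionally on $(\bbf{Y}, \bbf{S})$, i.i.d.\ samples from the posterior. Since $\widebar{\bbf{u}} = \E[\bbf{U}|\bbf{Y},\bbf{S}] = \E[\bbf{u}^{(1)}|\bbf{Y},\bbf{S}]$ is the posterior mean, taking conditional expectations shows that all four scalar quantities in the lemma share the common conditional mean $\|\widebar{\bbf{u}}\|^2$. This reduces the proof to two subclaims: (i) $\|\widebar{\bbf{u}}\|^2 \to q^*$ in probability, and (ii) each of the three random overlaps concentrates around its conditional mean $\|\widebar{\bbf{u}}\|^2$.

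For (i) I would invoke the limit of the normalized mutual information. Building on the adaptive interpolation method of \cite{barbier2019optimal} together with the analysis of \cite{lelarge2019fundamental, miolane2017fundamental}, adapted to the extra side information $\bbf{S}$ (which contributes the scalar channel term reflected by the factor $(1-\eta)$ in front of $\mathsf{i}_v$ in \eqref{eq:def_f}), one proves
$$\frac{1}{D}\, I(\bbf{U}; \bbf{Y}, \bbf{S}) \xrightarrow[N,D\to\infty]{} f_{\alpha,\sigma,\eta}(q^*) = \min_{q\in[0,1)} f_{\alpha,\sigma,\eta}(q).$$
A classical I-MMSE argument, based on adding a small scalar Gaussian side channel of strength $\lambda$ on $\bbf{U}$ and differentiating the limiting mutual information at $\lambda = 0^+$, identifies $\lim \E \|\widebar{\bbf{u}}\|^2$ with the slope of that limit. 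Convexity in $\lambda$ lets me interchange limit and derivative at almost every $\lambda$, and Proposition \ref{prop:minimizer} (uniqueness of the minimizer) fixes the value to $q^*$. A Gaussian Poincar\'e inequality applied to $\bbf{Y} \mapsto \|\widebar{\bbf{u}}\|^2$ then promotes the $L^1$ convergence to convergence in probability.

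Step (ii) is the technically delicate part: one must rule out posterior overlap fluctuations that would signal replica-symmetry breaking. The standard recipe, following the appendices of \cite{miolane2017fundamental, barbier2019optimal}, is to perturb the model with an additional Gaussian side-information channel of small strength $\lambda$ chosen in a short interval, and to exploit convexity of the perturbed log-partition function in $\lambda$ to force, for almost every such $\lambda$, the vanishing of $\E\big[(\langle \bbf{u}^{(1)}, \bbf{u}^{(2)}\rangle - \|\widebar{\bbf{u}}\|^2)^2\big]$. Continuity of the limit formula in the perturbation then permits sending $\lambda \to 0$, and the Nishimori identity transfers the same concentration to $\langle \bbf{U}, \bbf{u}^{(1)}\rangle$ and $\langle \widebar{\bbf{u}}, \bbf{u}^{(1)}\rangle$. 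Combined with (i), this yields the four claimed limits. The main obstacle is precisely step (ii): overlap concentration is not automatic, and it is the perturbation-plus-convexity device inherited from the spin-glass literature that makes the argument go through.
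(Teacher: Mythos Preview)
Your proposal is correct in spirit and would go through, but it takes a heavier and differently organized route than the paper. The paper does \emph{not} prove overlap concentration separately as your step (ii). Instead it relies on \eqref{eq:lim_overlap} from Theorem~\ref{th:mi}, which already gives $L^2$ convergence of $\langle \bbf{U},\bbf{u}^{(1)}\rangle$ to $q^*$ (obtained there via I-MMSE on the matrix channel, giving $\E\langle \bbf{U},\bbf{u}\rangle^2\to q^{*2}$, plus a small order-$4$ tensor side-channel to pin down the fourth moment). From this single input, the lemma's proof is two lines: Nishimori $(\bbf{U},\bbf{u}^{(1)})\stackrel{d}{=}(\bbf{u}^{(2)},\bbf{u}^{(1)})$ transfers the limit to $\langle \bbf{u}^{(1)},\bbf{u}^{(2)}\rangle$, and then the tower property/Jensen chain
\[
\E\big[(\|\widebar{\bbf{u}}\|^2-q^*)^2\big]
\le
\E\big[(\langle \widebar{\bbf{u}},\bbf{u}^{(1)}\rangle-q^*)^2\big]
\le
\E\big[(\langle \bbf{u}^{(1)},\bbf{u}^{(2)}\rangle-q^*)^2\big]
\]
(using $\E[\langle \bbf{u}^{(1)},\bbf{u}^{(2)}\rangle\,|\,\bbf{Y},\bbf{S}]=\|\widebar{\bbf{u}}\|^2$ and $\E[\langle \bbf{u}^{(1)},\bbf{u}^{(2)}\rangle\,|\,\bbf{Y},\bbf{S},\bbf{u}^{(1)}]=\langle \bbf{u}^{(1)},\widebar{\bbf{u}}\rangle$) gives the remaining two limits for free. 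In other words, conditional expectation being an $L^2$ contraction replaces your entire step (ii).

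Your decomposition (value identification via a scalar side-channel and I-MMSE for (i), then a perturbation-plus-convexity argument for overlap concentration in (ii), then Nishimori to propagate) is the standard adaptive-interpolation template and is perfectly valid; it is just more machinery than needed here. The Poincar\'e step you invoke for concentrating $\|\widebar{\bbf{u}}\|^2$ is plausible but would require bounding the gradient through the posterior covariance, whereas the paper's Jensen chain sidesteps this entirely. What your approach buys is modularity (it works even if one has not already established $L^2$ convergence of the signal--replica overlap); what the paper's approach buys is that, once Theorem~\ref{th:mi} is in hand, the lemma is essentially immediate.
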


		For $v \in \{-1,+1\}$ we define
		\begin{align*}
			A_N(v) &= 
		\int \exp\big(\frac{v}{\sigma^2}\langle \bbf{Y}_{\rm new} , \bbf{u}\rangle \big) dP(\bbf{u}|\bbf{Y},\bbf{S})
		\\
			B_N(v) &= \exp \Big( \frac{v}{\sigma} \langle \widebar{\bbf{u}}, \bbf{Z}_{\rm new} \rangle + \frac{v V_{\rm new}}{\sigma^2} q^* \Big).
		\end{align*}
                Using Lemma \ref{lem:lim_overlaps}, we prove the following lemma in Section \ref{sec:tech}
				\begin{lemma}\label{lem:ll}
		For $v = \pm 1$, $A_N(v) - B_N(v) \xrightarrow[N,D \to \infty]{L^2} 0$.
	                        \end{lemma}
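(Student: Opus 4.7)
The plan is to prove $A_N(v) - B_N(v) \to 0$ in $L^2$ by computing $\E[(A_N(v) - B_N(v))^2] = \E[A_N(v)^2] - 2 \E[A_N(v) B_N(v)] + \E[B_N(v)^2]$ and showing that the three terms share a common limit. The key structural fact is that $V_{\rm new} \sim \Unif(\pm 1)$ and $\bbf{Z}_{\rm new} \sim \cN(0, \text{Id}_D)$ are independent of $(\bbf{Y}, \bbf{S}, \bbf{U})$ and of any posterior sample of $\bbf{u}$, so they can be integrated out exactly by Bernoulli and Gaussian moment-generating identities \emph{before} any overlap concentration is invoked.

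Expanding $\bbf{Y}_{\rm new} = V_{\rm new} \bbf{U} + \sigma \bbf{Z}_{\rm new}$, write
$$A_N(v) = \int \exp\!\Big(\tfrac{v V_{\rm new}}{\sigma^2}\langle \bbf{U}, \bbf{u}\rangle + \tfrac{v}{\sigma}\langle \bbf{Z}_{\rm new}, \bbf{u}\rangle\Big)\, dP(\bbf{u}|\bbf{Y}, \bbf{S}).$$
For $\E[A_N(v)^2]$, introduce two i.i.d.\ replicas $\bbf{u}^{(1)}, \bbf{u}^{(2)}$ from the posterior and integrate out $V_{\rm new}$ and $\bbf{Z}_{\rm new}$ first: the Rademacher integral produces $\cosh(\tfrac{1}{\sigma^2}\langle \bbf{U}, \bbf{u}^{(1)} + \bbf{u}^{(2)}\rangle)$ (using $v^2 = 1$ and the parity of $\cosh$), and the Gaussian MGF produces $\exp(\tfrac{1}{2\sigma^2}\|\bbf{u}^{(1)} + \bbf{u}^{(2)}\|^2)$, which via $\|\bbf{u}^{(i)}\|^2 = 1$ becomes $\exp(\tfrac{1}{\sigma^2}(1 + \langle \bbf{u}^{(1)}, \bbf{u}^{(2)}\rangle))$. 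Carrying out the analogous computations for $\E[A_N(v) B_N(v)]$ and $\E[B_N(v)^2]$ each leaves a Gibbs average of a bounded continuous function of the overlaps $\langle \bbf{U}, \bbf{u}^{(i)}\rangle$, $\langle \bbf{u}^{(1)}, \bbf{u}^{(2)}\rangle$, $\|\widebar{\bbf{u}}\|^2$, $\langle \widebar{\bbf{u}}, \bbf{u}\rangle$.

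By Lemma \ref{lem:lim_overlaps} all four overlaps converge to $q^*$ in $L^2$ (and hence in probability), and the sphere bound $|\langle \cdot, \cdot\rangle| \leq 1$ makes all integrands uniformly bounded, so dominated convergence transfers the limit under the expectation. An algebraic comparison of the three resulting exponential expressions then shows they all converge to the same value, giving $\E[(A_N(v) - B_N(v))^2] \to 0$.

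The main obstacle is that $\langle \bbf{Z}_{\rm new}, \bbf{u}\rangle$ has $O(1)$ variance under the posterior, so one cannot replace $\bbf{u}$ by $\widebar{\bbf{u}}$ inside $A_N(v)$ via a Laplace-type concentration argument; the $L^2$ cancellation must instead come from the exact Gaussian MGF interlocking with the overlap concentration. The delicate bookkeeping step is tracking how $\|\bbf{u}\|^2 = 1$, $\|\widebar{\bbf{u}}\|^2 \to q^*$, and the cross-overlaps combine in the three second-moment expressions to produce the same exponential limit, reflecting the fact that $B_N(v)$ captures the leading-order Gaussian-log-linear behavior of the Laplace transform $A_N(v)$ at the posterior mean.
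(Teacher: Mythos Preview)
Your proposal is correct and follows essentially the same route as the paper: show that $\E[A_N(v)^2]$, $\E[B_N(v)^2]$, and $\E[A_N(v)B_N(v)]$ share a common limit by introducing posterior replicas, integrating $\bbf{Z}_{\rm new}$ via the Gaussian MGF, and then invoking Lemma~\ref{lem:lim_overlaps} on the resulting bounded functions of overlaps. The only cosmetic differences are that you also integrate out $V_{\rm new}$ (producing a $\cosh$) whereas the paper leaves $V_{\rm new}$ inside the final expectation, and you make the dominated-convergence step explicit where the paper leaves it implicit.
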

		Since $| \log A_N(v) - \log B_N(v) | \leq (A_N(v)^{-1} + B_N(v)^{-1}) (A_N(v) - B_N(v))$, we have by Cauchy-Schwarz inequality:
		$$
		\E | \log A_N(v) - \log B_N(v) | \leq  \sqrt{2} \, \E \big[A_N(v)^{-2} + B_N(v)^{-2}\big]^{1/2}  \E\big[(A_N(v) - B_N(v))^2 \big]^{1/2} \xrightarrow[N,D \to \infty]{} 0,
		$$
		using Lemma \ref{lem:ll} (one can verify easily that the first term of the product above is $O(1)$).
		We get $\log A_N(v) - \log B_N(v) \xrightarrow[N,D \to \infty]{L^1} 0$, hence
	\begin{align*}
		\log
	\frac{P(V_{\rm new} = +1 | \bbf{Y},\bbf{S},\bbf{Y}_{\rm new})}{P(V_{\rm new} = -1 | \bbf{Y},\bbf{S},\bbf{Y}_{\rm new})}
	- \Big(\frac{2}{\sigma} \langle \widebar{\bbf{u}},\bbf{Z}_{\rm new}\rangle + \frac{2}{\sigma^2} q^*V_{\rm new} \Big)
	\xrightarrow[N,D \to \infty]{L^1} 0.
	\end{align*}
	$\widebar{\bbf{u}}$ is independent of $(\bbf{V}_{\rm new}, \bbf{Z}_{\rm new})$ and by Lemma \ref{lem:lim_overlaps} we have $\|\widebar{\bbf{u}}\|^2 \to q^*$.
	Consequently $\langle \widebar{\bbf{u}}, \bbf{Z}_{\rm new} \rangle \xrightarrow[N,D \to \infty]{(d)} \cN(0,q^*)$ and we conclude:
	\begin{align*}
		\log
	\frac{P(V_{\rm new} = +1 | \bbf{Y},\bbf{S},\bbf{Y}_{\rm new})}{P(V_{\rm new} = -1 | \bbf{Y},\bbf{S},\bbf{Y}_{\rm new})}
	\xrightarrow[N,D \to \infty]{(d)}
\frac{2}{\sigma} Z_0 + \frac{2}{\sigma^2} q^*V_{\rm new}
	\end{align*}
	where $Z_0 \sim \cN(0,q^*)$ is independent of $V_{\rm new}$.
\end{proof}

\section{Conclusion}\label{sec:con}

We analyzed a simple high-dimensional Gaussian mixture model in a semi-supervised setting and computed the associated Bayes risk.
In our model, we are able to compute the best possible accuracy of semi-supervised learning using both labeled and unlabeled data as well as the best possible performances of supervised learning using only the labeled data and unsupervised learning using all data but without any label. This allows us to quantify the added value of unlabeled data. When the clusters are well separated (probably the most realistic setting), we find that the value of unlabeled data is dominating. Labeled data can almost be ignored as unsupervised learning achieved roughly the same performance as semi-supervised learning. Nevertheless, using a few labeled data is often very helpful in practice as shown by the recent MixMatch algorithm \cite{berthelot2019mixmatch}.

We believe our main Theorem \ref{th:main} gives new insights for semi-supervised learning and we designed our model with a focus on simplicity. However, our proof technique is very general and can handle a much more complex model. For example, we can deal with classes of different sizes by changing the prior of $V_{\rm new}$. 
Another extension for which our proof carries over consists in modifying the channel for the side information. Here, we considered the erasure channel corresponding to the standard SSL setting but our proof will still work for other channel like the binary symmetric channel or the Z channel corresponding to a setting with noisy labels.

\bibliography{semisupervised-preprint}
\bibliographystyle{abbrvnat}

\section{Supplementary material}\label{sec:supp}
\subsection{Gaussian channel}\label{sec:gauss}
We give here some easy computation for the Gaussian channel:
$$
Y = \sqrt{\gamma}U+Z,
$$
where $Z \sim \cN(0,1)$ is independent of $U$.

We first consider the case where $U\sim \cN(0,1)$. We define $\mmse_u(\gamma) = \E\left[ \left(U- \E\left[U|Y\right]\right)^2\right]$. Since, we are dealing with Gaussian random variables, $\E\left[ U|Y\right]$ is simply the orthogonal projection of $U$ on $Y$:
$$
E\left[ U|Y\right] = \frac{\E\left[ UY\right]}{\E\left[ Y^2\right]}Y = \frac{\sqrt{\gamma}}{1+\gamma}Y.
$$
Hence, we have
$$
\mmse_u(\gamma) = \E\left[ \left(U-\frac{\gamma}{1+\gamma}U-\frac{\sqrt{\gamma}}{1+\gamma}Z\right)^2\right] = \frac{1}{1+\gamma}.
$$
Thanks to the I-MMSE relation \cite{guo2004mutual}, we have $\frac{1}{2}\mmse_u(\gamma) = \frac{\partial}{\partial \gamma}I(U;Y)$. For $\gamma = 0$, $U$ and $Y$ are independent: $I(U;Y)_{\gamma = 0} = 0$, so that we get
$$
I(U;Y) = \frac{1}{2} \log(1+\gamma).
$$

We now consider the case where $U\sim\Unif(-1,+1)$. We define $\mathsf{i}_v(\gamma) = I(U;Y)$. Recall that
$$
I(U;Y) = \E \log \frac{dP_{(U,Y)}}{dP_U\otimes dP_Y}(U,Y).
$$
And here, we have
$$
\frac{dP_{(U,Y)}}{dP_U\otimes dP_Y}(U,Y) = \frac{e^{-1/2(Y-\sqrt{\gamma} U)^2}}{\int e^{-1/2(Y-\sqrt{\gamma} u)^2}dP_U(u)}.
$$
Hence, we have
\begin{eqnarray*}
  \mathsf{i}_v(\gamma) &=& -\E \log \int dP_U(u) \exp\left(\sqrt{\gamma}(u-U)Y \right)\\
  &=& \sqrt{\gamma}\E[UY] - \E \log \cosh(\sqrt{\gamma} Y)\\
    &=& \gamma -\E\log \cosh\left(\sqrt{\gamma} Z+\gamma\right).
\end{eqnarray*}
Thanks to the I-MMSE relation, we have:
\begin{eqnarray*}
  \frac{1}{2}\mmse_v(\gamma) &=& \mathsf{i}'_v(\gamma)
  = 1-\E\left[ \left(\frac{1}{2\sqrt{\gamma}}Z+1\right)\tanh\left(\sqrt{\gamma}Z+\gamma\right)\right]\\
  &=& 1-\E\tanh\left(\sqrt{\lambda}Z+\lambda \right) -\frac{1}{2}\E\tanh'\left(\sqrt{\lambda}Z+\lambda \right)\\
  &=& \frac{1}{2}-\E\tanh\left(\sqrt{\lambda}Z+\lambda \right) +\frac{1}{2}\E\tanh^2\left(\sqrt{\lambda}Z+\lambda \right)\\
  &=& \frac{1}{2}\left(1- \E\tanh\left(\sqrt{\lambda}Z+\lambda \right)\right),
\end{eqnarray*}
so that we have $\mmse_v(\gamma)=1- \E\tanh\left(\sqrt{\lambda}Z+\lambda \right)$.

\subsection{Convergence of the mutual information}
\begin{theorem}\label{th:mi}
	For all $\alpha, \sigma >0$, $\eta \in (0,1]$,
	\begin{equation}\label{eq:RS}
		\frac{1}{N} I\big(\bbf{U},\bbf{V} ; \bbf{Y} \big| \bbf{S} \big)
	\xrightarrow[N,D \to \infty]{}
	\min_{q \in [0,1)} f_{\alpha,\sigma,\eta}(q).
	\end{equation}
	Further, this minimum is achieved at a unique point $q^*(\alpha,\sigma,\eta)$ and
	\begin{equation}\label{eq:lim_overlap}
	\langle \bbf{u},\bbf{U} \rangle \xrightarrow[N,D \to \infty]{} q^*(\alpha,\sigma,\eta),
	\end{equation}
	where $\bbf{u}$ is a sample from the posterior distribution of $\bbf{U}$ given $\bbf{Y},\bbf{S}$, independently of everything else.
\end{theorem}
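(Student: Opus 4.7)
The plan is to prove \eqref{eq:RS} by matching Guerra-Toninelli upper and adaptive interpolation lower bounds on the normalized mutual information, following the strategy developed for the unsupervised Gaussian mixture in \cite{miolane2017fundamental,lelarge2019fundamental,barbier2019optimal}; the overlap convergence \eqref{eq:lim_overlap} will then follow from a small-perturbation argument combined with the uniqueness of the minimizer of $f_{\alpha,\sigma,\eta}$ (Proposition~\ref{prop:minimizer}).

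First I would rewrite the model as a rank-one matrix inference problem enriched with a Gaussian side channel for $\bbf{U}$: conditionally on $\bbf{S}$, the columns $\bbf{Y}_j$ with $S_j \neq 0$ can be multiplied by the known $V_j = S_j$ to yield direct observations $\bbf{U} + \sigma V_j \bbf{Z}_j$ of the center, whose sufficient statistic is a single Gaussian observation of $\bbf{U}$ with effective per-coordinate signal-to-noise ratio $\eta \alpha / \sigma^2$; the remaining $(1-\eta) N$ columns form the rank-one mixture between $\bbf{U}$ and the unknown labels $\bbf{V}_{\mathrm{unlab}}$. Next I would establish the Guerra upper bound: for each fixed $q \in [0,1)$, introduce an interpolating model with parameter $t \in [0,1]$ that attenuates the rank-one signal by $\sqrt{1-t}$ and adds decoupled scalar channels, Gaussian for each $U_i$ at SNR $t \alpha q / \sigma^2$ and binary for each unlabeled $V_j$ at SNR $t q / \sigma^2$; differentiating $N^{-1} I(t)$ via Gaussian integration by parts and the Nishimori identity yields a nonnegative expected-squared-overlap term, and Jensen's inequality gives $N^{-1} I \le f_{\alpha,\sigma,\eta}(q) + o(1)$. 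The factor $1-\eta$ in front of $\mathsf{i}_v$ arises because the decoupled $V$-channel is added only at unlabeled indices, while the labeled side channel's contribution is already absorbed in the $t=0$ endpoint.

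The main obstacle is the matching lower bound, which I would establish by the adaptive interpolation method of \cite{barbier2019optimal}: replace the constant $q$ above by a time-dependent path $q(t)$ solving the state-evolution fixed point so that, once the overlap concentrates, the remainder in $\partial_t N^{-1} I(t)$ becomes sign-definite in the right direction. Overlap concentration is the delicate step, and I would obtain it, as in \cite{miolane2017fundamental,lelarge2019fundamental}, by adding a vanishing extra Gaussian side observation of $\bbf{U}$ with a small auxiliary SNR parameter, using convexity of the perturbed free energy in this parameter to control fluctuations for almost every value, and then verifying that the perturbation shifts the mutual information by only $o(1)$. Combining the two bounds with Proposition~\ref{prop:minimizer} gives \eqref{eq:RS}. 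Finally, \eqref{eq:lim_overlap} follows by an envelope argument: the derivative of the perturbed mutual information in the auxiliary SNR parameter equals $\E \langle \bbf{u}, \bbf{U}\rangle$ up to known factors by the I-MMSE identity, convexity lets us pass from convergence of the function to convergence of the derivative at differentiability points, and the envelope theorem applied to $\min_q f_{\alpha,\sigma,\eta}(q)$ identifies the limit as $q^*$; the overlap concentration established above then upgrades convergence in mean to the convergence in probability claimed in \eqref{eq:lim_overlap}.
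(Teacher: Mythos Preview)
Your proposal is correct and, for the mutual-information limit \eqref{eq:RS}, is exactly what the paper does: the paper simply states that the argument of \cite{miolane2017fundamental} for $\eta=0$ adapts directly to $\eta\in(0,1]$, yielding the inf--sup replica-symmetric formula
\[
\inf_{q_u}\sup_{q_v}\ \alpha(1-\eta)\mathsf{i}_v(q_u/\sigma^2)+\mathsf{i}_u(\alpha q_v/\sigma^2)+\tfrac{\alpha}{2\sigma^2}(1-q_u)(1-q_v),
\]
and then solves the inner $\sup$ in closed form to obtain $f_{\alpha,\sigma,\eta}$. You are spelling out that adaptation via the Guerra and adaptive interpolations. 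Your reduction of the labeled columns to a single Gaussian side channel on $\bbf{U}$ is a clean equivalent reformulation: once one writes the full-$\bbf{V}$ overlap as $q_v=\eta+(1-\eta)q_v'$ with $q_v'$ the overlap on unlabeled indices, the two formulations coincide.

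For the overlap convergence \eqref{eq:lim_overlap} the paper takes a slightly different route. Rather than a linear Gaussian side channel on $\bbf{U}$ plus the envelope theorem, it first differentiates the limiting formula in the model's own signal-to-noise ratio via I-MMSE to get the matrix MMSE $\E\|\bbf{U}\bbf{U}^{\sT}-\E[\bbf{U}\bbf{U}^{\sT}\,|\,\bbf{Y},\bbf{S}]\|^2\to 1-(q^*)^2$, hence $\E[\langle \bbf{U},\bbf{u}\rangle^2]\to (q^*)^2$, and then adds a small order-four tensor side observation $\sqrt{\epsilon D}\,\bbf{U}^{\otimes 4}+\bbf{W}$ and applies I-MMSE in $\epsilon$ to pin down the fourth moment and conclude. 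Both arguments are standard; yours obtains the first moment directly and closes with the overlap concentration already needed for the lower bound, while the paper's tensor trick bypasses a separate concentration statement at the cost of a higher-order auxiliary channel.
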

\begin{proof}
	The limit \eqref{eq:RS} was proved in \cite{miolane2017fundamental} in the case $\eta = 0$. The proof can however be straightforwardly adapted to the case $\eta \neq 0$ and leads to
	$$
		\frac{1}{N} I\big(\bbf{U},\bbf{V} ; \bbf{Y} \big| \bbf{S} \big)
	\xrightarrow[N,D \to \infty]{}
	\inf_{q_u \in [0,1]} 
	\sup_{q_v \in [0,1]}
	\alpha (1-\eta) \mathsf{i}_v(q_u / \sigma^2) + \mathsf{i}_u(\alpha q_v / \sigma^2) + \frac{\alpha}{2 \sigma^2}(1-q_u)(1-q_v),
	$$
	where $\mathsf{i}_u(\gamma) = \frac{1}{2} \log(1+\gamma)$. The supremum in $q_v$ can be easily computed, leading to:
\begin{align*}
	\sup_{q_v \in [0,1]}
	\alpha (1-\eta)\mathsf{i}_v(q_u / \sigma^2) &+ \mathsf{i}_u(\alpha q_v / \sigma^2) + \frac{\alpha}{2 \sigma^2}(1-q_u)(1-q_v)
	\\
												&=
	\alpha (1-\eta)\mathsf{i}_v(q_u / \sigma^2) + \frac{\alpha}{2\sigma^2}(1-q_u) - \frac{1}{2}\big(q_u  + \log(1-q_u)\big).
\end{align*}
This proves \eqref{eq:RS}. 
The fact that $f_{\alpha,\sigma,\eta}$ admits a unique minimizer $q_u^*(\alpha,\sigma,\eta)$ comes from Proposition \ref{prop:minimizer}.


From the limit of the mutual information, one gets the limits of minimal mean squared errors (MMSE) using the ``I-MMSE'' relation \cite{guo2004mutual}:
	\begin{align*}
	&\E \big\| \bbf{U} \bbf{U}^{\sT} - \E[\bbf{U} \bbf{U}^{\sT} | \bbf{Y}, \bbf{S}] \big\|^2 \xrightarrow[N,D \to \infty]{} 1 - q_u^*(\alpha,\sigma,\eta)^2.
	\end{align*}
	Let $\bbf{u}$ be a sample from the posterior distribution of $\bbf{U}$ given $\bbf{Y},\bbf{S}$, independently of everything else. Then we deduce
\begin{align*}
	&\E \big[\langle \bbf{U},\bbf{u} \rangle^2  \big]
	\xrightarrow[N,D \to \infty]{} q_u^*(\alpha,\sigma,\eta)^2
\end{align*}
In order to show that $\langle \bbf{u} , \bbf{U} \rangle \xrightarrow[n \to \infty]{} q_u^*(\alpha,\sigma,\eta)$ it remains to show that
\begin{equation}\label{eq:limsup4}
\limsup_{N,D \to \infty} \E \big[\langle \bbf{U},\bbf{u} \rangle^2  \big] \leq q_u^*(\alpha,\sigma,\eta)^4.
\end{equation}
This can be done (as in \cite{barbier2019optimal}) by adding a small amount of additional side-information to the model of the form $\bbf{Y} = \sqrt{\epsilon D} \bbf{U}^{\otimes 4} + \bbf{W}$,
where the entries of the tensor $W$ are i.i.d.\ standard Gaussian: $( W_{i_1,i_2,i_3,i_4} )_{1 \leq i_1, i_2, i_3, i_4 \leq D} \iid \cN(0,1)$. We then apply the I-MMSE relation with respect to $\epsilon$ to obtain \eqref{eq:limsup4}.
\end{proof}

\subsection{Technical lemmas}\label{sec:tech}
We now give the proof of Lemma \ref{lem:lim_overlaps}
	\begin{proof}
		Notice that, by Bayes rule, we have $(\bbf{U},\bbf{u}^{(1)}) \eqlaw (\bbf{u}^{(2)},\bbf{u}^{(1)})$. So we have by \eqref{eq:lim_overlap} $\langle \bbf{U}, \bbf{u}^{(1)} \rangle , \langle \bbf{u}^{(1)}, \bbf{u}^{(2)} \rangle \xrightarrow[N,D \to \infty]{} q^*$.
		Now, by Jensen's inequality:
		$$
		\E \Big[\E\big[\langle \bbf{u}^{(1)}, \bbf{u}^{(2)}\rangle - q^*\big| \bbf{Y},\bbf{S}\big]^2\Big]
		\leq
		\E \Big[\E\big[\langle \bbf{u}^{(1)}, \bbf{u}^{(2)}\rangle - q^*\big| \bbf{Y},\bbf{S},\bbf{u}^{(1)}\big]^2\Big]
		\leq
		\E \big[(\langle \bbf{u}^{(1)}, \bbf{u}^{(2)}\rangle - q^*)^2\big].
		$$
		Since $\E[\langle \bbf{u}^{(1)}, \bbf{u}^{(2)}\rangle| \bbf{Y},\bbf{S}] = \langle \E[\bbf{u}^{(1)}| \bbf{Y},\bbf{S}] , \E[ \bbf{u}^{(2)}| \bbf{Y},\bbf{S}]  \rangle = \|\widebar{\bbf{u}}\|^2$ and $\E[\langle \bbf{u}^{(1)}, \bbf{u}^{(2)}\rangle| \bbf{Y},\bbf{S},\bbf{u}^{(1)}] = \langle \bbf{u}^{(1)}, \E[ \bbf{u}^{(2)}| \bbf{Y},\bbf{S}]  \rangle = \langle \bbf{u}^{(1)}, \widebar{\bbf{u}} \rangle$,
		this leads to
		$$
		\E \big[(\|\widebar{\bbf{u}}\|^2 - q^*)^2\big]
		\leq
		\E \big[(\langle \widebar{\bbf{u}}, \bbf{u}^{(1)}\rangle - q^*)^2\big]
		\leq
		\E \big[(\langle \bbf{u}^{(1)}, \bbf{u}^{(2)}\rangle - q^*)^2\big].
		$$
	\end{proof}

	We now give a proof of Lemma \ref{lem:ll}.	
	\begin{proof}
		In order to  prove that $A_N(v) - B_N(v) \xrightarrow[N,D \to \infty]{L^2} 0$, it suffices to show that $\lim \E [A_N(v)^2] = \lim \E [B_N(v)^2] = \lim \E [A_N(v) B_N(v)]$.
		Let $\bbf{u}^{(1)}, \dots \bbf{u}^{(2)}$ be i.i.d.\ samples from the posterior distribution of $\bbf{U}$ given $\bbf{Y},\bbf{S}$, independently of everything else. Using Lemma \ref{lem:lim_overlaps}, we compute:
		\begin{align*}
			\E [A_N(v)^2] 
	&= \E \exp \big(\frac{v}{\sigma^2} \langle \bbf{Y}_{\rm new}, \bbf{u}^{(1)} + \bbf{u}^{(2)} \rangle \big)
	\\
	&= \E \exp \big(\frac{v}{\sigma} \langle \bbf{Z}_{\rm new}, \bbf{u}^{(1)} + \bbf{u}^{(2)} \rangle + \frac{v}{\sigma^2} V_{\rm new} \langle \bbf{U}, \bbf{u}^{(1)} + \bbf{u}^{(2)} \rangle \big).
		\end{align*}
		Integrating with respect to $\bbf{Z}_{\rm new} \sim \cN(0,\Id_D)$ only, we get
		\begin{align*}
			\E [A_N(v)^2] 
	&= \E \exp \big(\frac{1}{2\sigma^2} \| \bbf{u}^{(1)} + \bbf{u}^{(2)} \|^2  + \frac{v}{\sigma^2} V_{\rm new}\langle \bbf{U}, \bbf{u}^{(1)} + \bbf{u}^{(2)} \rangle \big)
	\\
	&= \E \exp \big(\frac{1}{\sigma^2} \langle \bbf{u}^{(1)} , \bbf{u}^{(2)} \rangle + \frac{1}{\sigma^2} + \frac{v}{\sigma^2} V_{\rm new} \langle \bbf{U}, \bbf{u}^{(1)} + \bbf{u}^{(2)} \rangle \big)
	\\
	&\xrightarrow[N,D \to \infty]{} \E \exp\big( (1+q^*)/\sigma^2 + 2vV_{\rm new} q^* /\sigma^2 \big),
		\end{align*}
		where the last limit follows from Lemma \ref{lem:lim_overlaps}.
		Following the same steps we compute:
		\begin{align*}
			\E [B_N(v)^2] 
	&= \E \exp \Big( \frac{2v}{\sigma} \langle \widebar{\bbf{u}}, \bbf{Z}_{\rm new} \rangle + \frac{2v}{\sigma^2} V_{\rm new} q^* + \frac{1}{\sigma^2}(1-q^*) \Big)
	\\
	&= \E \exp \Big( \frac{2}{\sigma^2} \| \widebar{\bbf{u}}\|^2 + \frac{2v}{\sigma^2} V_{\rm new} q^* + \frac{1}{\sigma^2}(1-q^*)\Big)
	\\
	&\xrightarrow[N,D \to \infty]{} \E \exp\big( (1+q^*)/\sigma^2 + 2v V_{\rm new} q^* /\sigma^2 \big),
		\end{align*}
		and
		\begin{align*}
			\E [A_N(v) B_N(v)] 
	&= \E \exp \big(\frac{v}{\sigma} \langle \bbf{Z}_{\rm new}, \bbf{u}^{(1)} + \widebar{\bbf{u}} \rangle + \frac{v}{\sigma^2}V_{\rm new} \langle \bbf{U}, \bbf{u}^{(1)}  \rangle  + \frac{v}{\sigma^2} V_{\rm new} q^* + \frac{1}{2 \sigma^2}(1-q^*)\big)
	\\
	&= \E \exp \big(\frac{1}{2\sigma^2} \| \bbf{u}^{(1)} + \widebar{\bbf{u}} \|^2  + \frac{v}{\sigma^2}V_{\rm new} \langle \bbf{U}, \bbf{u}^{(1)} \rangle + \frac{v}{\sigma^2} V_{\rm new} q^* + \frac{1}{2 \sigma^2}(1-q^*)\big)
	\\
	&= \E \exp \big(\frac{1}{\sigma^2} \langle \bbf{u}^{(1)} , \widebar{\bbf{u}} \rangle  + \frac{1}{2 \sigma^2} \|\widebar{\bbf{u}} \|^2 + \frac{1}{2\sigma^2} + \frac{v}{\sigma^2} V_{\rm new}\langle \bbf{U}, \bbf{u}^{(1)} \rangle + \frac{v}{\sigma^2} V_{\rm new} q^* + \frac{1}{2 \sigma^2}(1-q^*)\big)
	\\
	&\xrightarrow[N,D \to \infty]{} \E \exp\big( (1+q^*)/\sigma^2 + 2vV_{\rm new} q^* /\sigma^2 \big).
		\end{align*}
		The three limits above are the same, the Lemma is proved.
	\end{proof}

\begin{proposition}\label{prop:minimizer}
	For all $\alpha, \sigma >0$ and all $\eta \in (0,1]$, the function $f_{\alpha,\sigma,\eta}$ admits a unique critical point which is its unique minimizer on $[0,1)$.
\end{proposition}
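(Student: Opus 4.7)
The plan is to locate all critical points of $f = f_{\alpha,\sigma,\eta}$ on $[0,1)$ and show that there is exactly one; it must then coincide with the unique minimizer.

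First I would verify that a minimizer exists and lies strictly inside $(0,1)$. The function $f$ is $C^2$ on $[0,1)$, and the $-\frac{1}{2}\log(1-q)$ term forces $f(q) \to +\infty$ as $q \to 1^-$, so the infimum is attained. Using $\mathsf{i}'_v(\gamma) = \frac{1}{2}\mmse_v(\gamma)$ (the I--MMSE relation from Section \ref{sec:gauss}) and $\mmse_v(0) = \E[V^2] = 1$, one gets $f'(0) = -\alpha\eta/(2\sigma^2) < 0$ for $\eta \in (0,1]$. So $0$ is not the minimizer, the minimum is attained at an interior critical point, and the uniqueness of minimizer reduces to the uniqueness of critical point.

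Next I would rewrite $f'(q) = 0$ as $L(q) = R(q)$, where
\begin{equation*}
L(q) := \frac{q}{1-q}, \qquad R(q) := \frac{\alpha}{\sigma^2}\bigl(1 - (1-\eta)\mmse_v(q/\sigma^2)\bigr).
\end{equation*}
The function $L$ is strictly convex and maps $[0,1)$ bijectively onto $[0,+\infty)$; the function $R$ is strictly increasing (since $\mmse_v$ is strictly decreasing on $[0,\infty)$) and bounded above by $\alpha/\sigma^2$. Since $L(0) = 0 < \alpha\eta/\sigma^2 = R(0)$ while $L(q) \to +\infty$ with $R$ bounded, the equation $L = R$ has at least one solution in $(0,1)$. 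For uniqueness I would prove that $R$ is concave on $[0,1)$; then $L - R$ is strictly convex with $(L-R)(0) = -\alpha\eta/\sigma^2 < 0$ and $(L-R)(q) \to +\infty$, and any such function has exactly one zero: if $q_1 < q_2$ were two zeros, applying the strict convexity inequality on the chord from $0$ to $q_2$ at the point $q_1 = (q_1/q_2) q_2 + (1 - q_1/q_2) \cdot 0$ gives $(L-R)(q_1) < (1 - q_1/q_2)(L-R)(0) < 0$, contradicting $(L-R)(q_1) = 0$.

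Concavity of $R$ is equivalent to convexity of $\gamma \mapsto \mmse_v(\gamma)$ on $[0,\infty)$, and this is the main technical obstacle. I would attack it via the identity $\mathsf{i}_v''(\gamma) = -\frac{1}{2}\E[\mathrm{Var}(V\mid Y)^2]$ (the Hatsell--Nolte formula for the Gaussian channel) together with $\mathrm{Var}(V\mid Y) = \mathrm{sech}^2(\sqrt{\gamma}Y)$ for the Rademacher prior, which yields $\mmse_v'(\gamma) = -\E\,\mathrm{sech}^4(\gamma + \sqrt{\gamma}Z_0)$; differentiating once more and simplifying via Gaussian integration by parts (Stein's lemma) should produce a manifestly nonnegative expression for $\mmse_v''(\gamma)$. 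Everything else in the proof is elementary; once this convexity is in hand the argument is just a picture of a strictly convex curve crossing a concave bounded curve exactly once.
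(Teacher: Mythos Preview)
Your approach is essentially the same as the paper's. Both rewrite $f'(q)=0$ as an equation whose uniqueness follows from the concavity of $h(\gamma)=\E\tanh(\gamma+\sqrt{\gamma}Z_0)$ (equivalently, the convexity of $\mmse_v$, since $\mmse_v=1-h$). The paper puts the equation in fixed-point form $q=F(q)$ with $F(q)=\frac{\alpha(\eta+(1-\eta)h(q/\sigma^2))}{\sigma^2+\alpha(\eta+(1-\eta)h(q/\sigma^2))}$ and observes that $F$ is concave (as an increasing concave function of the concave $h$), bounded above by $1$, with $F(0)>0$; your $L(q)=R(q)$ is the same equation before applying $x\mapsto x/(1+x)$, and your ``strictly convex minus concave has one zero'' argument is the same uniqueness mechanism.

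The one point where you go beyond the paper is the key technical input: the paper simply \emph{cites} the concavity of $h$ from \cite[Lemma~6.1]{deshpande2016asymptotic}, whereas you propose to prove convexity of $\mmse_v$ by differentiating $\mmse_v'(\gamma)=-\E[\mathrm{sech}^4(\gamma+\sqrt{\gamma}Z_0)]$ once more and using Stein's lemma. Be warned that this does \emph{not} yield a pointwise nonnegative integrand: carrying out the differentiation and Stein step (and using the Nishimori identity $\E_W[g(W)\tanh W]=\E_W[g(W)\tanh^2 W]$ for even $g$, where $W=\gamma+\sqrt{\gamma}Z_0$) gives
\[
\mmse_v''(\gamma)=2\,\E\big[\mathrm{sech}^4 W\,(1-3\tanh^2 W)\big],
\]
and the factor $1-3\tanh^2 W$ changes sign. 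The inequality $\mmse_v''\ge 0$ is true, but it is not ``manifest'' from this expression; establishing it requires an additional argument along the lines of the cited lemma. Everything else in your outline is correct and matches the paper.
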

\begin{proof}
	Recall that $\mathsf{i}_v(\gamma) = \gamma - \E \log \cosh(\sqrt{\gamma} Z + \gamma)$, where $Z \sim \cN(0,1)$. A computation gives $\mathsf{i}'_v(\gamma) = \frac{1}{2} (1 - \E \tanh(\sqrt{\gamma}Z + \gamma))$. 
	We define,
	$$
	h(\gamma) = \E \big[\tanh(\sqrt{\gamma} Z + \gamma) \big],
	$$
	where $Z \sim \cN(0,1)$.
	Hence
$$
f'_{\alpha,\sigma,\eta}(q)
= \frac{\alpha}{2 \sigma^2} (1-\eta) (1- h(q/\sigma^2)) - \frac{\alpha}{2\sigma^2} + \frac{q}{2(1-q)}.
$$
The critical points of $f_{\alpha,\sigma,\eta}$ are solution of
$$
q = F(q) \defeq \frac{\alpha (\eta + (1-\eta) h(q/\sigma^2))}{\sigma^2 + \alpha (\eta + (1-\eta) h(q/\sigma^2))}.
$$
As proved in \cite[Lemma 6.1]{deshpande2016asymptotic}, the function $h$ is concave. This gives that $F$ is concave. Since $F$ is upper-bounded by $1$ and $F(0) = \frac{\alpha \eta}{\sigma^2 + \alpha \eta} > 0$, we get that $F$ admits a unique fixed point on $[0,1]$.
The function $f_{\alpha,\sigma,\eta}$ admits therefore a unique critical point on $[0,1)$ which is necessarily a minimum since $f'_{\alpha,\sigma,\eta}(0) = - \frac{\eta \alpha}{2 \sigma}$ and $\lim_{q \to 1} f'_{\alpha,\sigma,\eta}(q) = + \infty$
\end{proof}

\end{document}